\definecolor{light-gray}{gray}{0.95}
\definecolor{shadecolor}{named}{light-gray}
\newcommand{\algo}{DRO}
\newtheorem{proposition}{Proposition}
\newtheorem*{proposition*}{Proposition}
\def\eqref#1{Equation~(\ref{#1})}
\def\1{\bm{1}}
\DeclareMathAlphabet{\mathsfit}{\encodingdefault}{\sfdefault}{m}{sl}
\SetMathAlphabet{\mathsfit}{bold}{\encodingdefault}{\sfdefault}{bx}{n}
\newcommand{\E}{\mathbb{E}}
\newcommand{\KL}{D_{\mathrm{KL}}}
\renewcommand{\hat}{\widehat}
\renewcommand{\phi}{\varphi}
\newcommand{\B}{\mathcal{B}}
\renewcommand{\L}{\mathcal{L}}
\newcommand{\beq}{\begin{equation}}
\newcommand{\eeq}{\end{equation}}
\newcommand{\beqa}{\begin{eqnarray}}
\newcommand{\eeqa}{\end{eqnarray}}
\newcommand{\beqan}{\begin{eqnarray*}}
\newcommand{\eeqan}{\end{eqnarray*}}
\renewcommand{\E}{\mathbb{E}}
\renewcommand{\1}[1]{\mathbb{1}\{#1\}}
\newcommand{\piref}{\pi_{\texttt{ref}}}
\renewcommand{\KL}{\mbox{KL}}
\newcommand{\eqdef}{\stackrel{\rm def}{=}}
\newcommand{\supp}{\mathop{\mathrm{supp}}}
\newtheorem{theorem}{Theorem}
\title{Offline Regularised Reinforcement Learning for \\ Large Language Models Alignment}
\author{%
Pierre Harvey Richemond\thanks{Equal contribution. Correspondence at richemond@google.com or piot@google.com.} \\
Google DeepMind \\
\And
Yunhao Tang$^{*}$ \\
Google DeepMind \\
\And
Daniel Guo \\
Google DeepMind
\And
Daniele Calandriello \\
Google DeepMind
\And
Mohammad Gheshlaghi Azar \\
Cohere \\
\And
Rafael Rafailov \\
Stanford University \\
\And
Bernardo Avila Pires \\
Google DeepMind \\
\And
Eugene Tarassov \\
Google DeepMind \\
\And
Lucas Spangher \\
Google DeepMind \\
\And
Will Ellsworth \\
Google DeepMind \\
\And
Aliaksei Severyn \\
Google Research \\
\And
Jonathan Mallinson \\
Google Research \\
\And
Lior Shani \\
Google Research \\
\And
Gil Shamir \\
Google DeepMind \\
\And
Rishabh Joshi \\
Google DeepMind \\
\And
Tianqi Liu \\
Google DeepMind \\
\And
Remi Munos$^{*}$ \\
Google DeepMind \\
\And
Bilal Piot$^{*}$ \\
Google DeepMind \\
}
\begin{document}

\maketitle

\begin{abstract}
The dominant framework for alignment of large language models (LLM), whether through reinforcement learning from human feedback or direct preference optimisation, is to learn from preference data. This involves building datasets where each element is a quadruplet composed of a prompt, two independent responses (completions of the prompt) and a human preference between the two independent responses, yielding a preferred and a dis-preferred response. Such data is typically scarce and expensive to collect. On the other hand, \emph{single-trajectory} datasets where each element is a triplet composed of a prompt, a response and a human feedback is naturally more abundant. The canonical element of such datasets is for instance an LLM's response to a user's prompt followed by a user's feedback such as a thumbs-up/down. Consequently, in this work, we propose \algo, or \emph{Direct Reward Optimisation}, as a framework and associated algorithms that do not require pairwise preferences. \algo~uses a simple mean-squared objective that can be implemented in various ways. We validate our findings empirically, using T5 encoder-decoder language models, and show \algo's performance over selected baselines such as Kahneman-Tversky Optimization (KTO). Thus, we confirm that \algo~is a simple and empirically compelling method for single-trajectory policy optimisation.
\end{abstract}

\section{Introduction}

Aligning the behavior of artificial agents with human preferences is critical for improving quality, helpfulness and safety~\citep{bai2022training} of agents' responses. The most established methodology for human alignment is Reinforcement Learning from Human Feedback (RLHF)~\citep{knox2008tamer,griffith2013policy,christiano2017deep,warnell2018deep} which consists of fine-tuning pre-trained Large Language Models (LLMs)~\citep{glaese2022improving,chatgpt}. More precisely, it typically entails learning a reward model under the Bradley-Terry model~\citep{bradley1952rank} of human preferences and subsequently employing reinforcement learning (RL) to optimise the LLM's performance as judged by this reward model~\citep{christiano2017deep,ziegler2020finetuning}. This optimisation is done by generating a set of responses and their associated rewards from a set of chosen prompts. Therefore, on top of learning an additional reward model, this method requires sampling from the LLM at training time, which is costly and technically challenging.

\citet{rafailov2023direct} introduced an alternative, reward-free and sampling-free method termed direct preference optimisation (DPO). This popular approach uses a supervised objective that contrasts pairs of responses to a specific prompt. DPO is able to circumvent the explicit learning of a reward signal, while remaining mathematically equivalent to the traditional RL approach, as proven by~\citet{azar2023general}. Such offline preference optimisation method has been extended to a few variants in follow-up work (see, e.g., \citep{zhao2023slichf,tang2024generalized}) and gained popularity in practice.

Despite DPO's widespread use, there is still an important remaining shortcoming to this approach which is the high-cost of collecting human preferences. Establishing human preferences might over time become a self-defeating endeavour: as LLMs improve in quality, the task of distinguishing between a pair of strong responses gets increasingly difficult (see, e.g., arguments in \citep{saunders2022self,bowman2022measuring}), and would require additional efforts in improving both the quality and scale of the collected human feedback.

Second and more importantly, annotating pairwise data is more expensive and less natural than simply indicating whether a single completion is satisfactory or not, e.g., by assigning a binary \emph{thumbs up} or \emph{down} rating to the model completion. The former is generally carried out by paid raters whereas the latter could be produced by users at a much larger scale. Consequently, \emph{single-trajectory} data is much more abundant in the wild, hence, cheaper and more easily collected than scarce preference data. Leveraging single trajectory data promises to unlock the benefits associated with scale in deep learning. Analogous algorithmic advances motivated by the need to decrease supervision signals have often resulted in important step-changes in the empirical capacities of deep networks, e.g. through self-supervised systems \citep{devlin2019bert, chen2020simple, BYOL, radford2021learning}.

In order to exploit this single trajectory setting, we introduce Direct Reward Optimisation (\algo). \algo~is a framework, derived from mathematical first principles. \algo~is designed to work in the \emph{offline} single-trajectory setting with human feedback. Specifically, our contributions are as follows: 
\begin{itemize}[leftmargin=*]
    \item We introduce \algo~as a generic framework performing single-trajectory RLHF optimisation thanks to a simple quadratic objective and perform theoretical analysis.
    \item  We propose a practical instantiation of \algo, $\texttt{DRO-V}$ which combines offline policy learning with a value function learning, and hence the suffix $\texttt{-V}$.
    \item We compare \texttt{\algo-V}~against Kahneman-Tversky Optimization (KTO)~\citep{kto}, an algorithm that has also been specifically designed for the single-trajectory setting. We find that \texttt{DRO-V} significantly outperforms KTO, when using T5 encoders~\citep{JMLR:v21:20-074} with up to $3$ billion parameters, on the \emph{UltraFeedback} dataset~\citep{cui2023ultrafeedback}. We perform several ablations to investigate and understand our algorithm's empirical performance.
\end{itemize}

\section{Background}
\label{sec:background}

Here we introduce background for RLHF and a few important alignment algorithms.

\textbf{Standard pairwise alignment. }Offline-alignment of LLMs has mainly been achieved using preference datasets of the form  $(x_i, y_i^w, y_i^l)_{i=1}^N$, where we are given a prompt $x_i$, and a pair of prompt completions (or \emph{generations}) $(y_i^w, y_i^l)$ with $y_i^w$ the preferred generation and $y_i^l$ the dis-preferred one. Most objectives in the RLHF literature, such as DPO \citep{rafailov2023direct}, IPO \citep{azar2023general} or SLiC \citep{zhao2023slichf}, can be described and subsumed by the following loss:
\begin{equation*}
    \mathcal{L}(\theta) = \frac{1}{n}\sum_{i=1}^n f\left(\beta \cdot \left(  \log\left(\frac{\pi_\theta(y_i^w|x_i)}{\piref(y_i^w|x_i)}\right) -  \log\left(\frac{\pi_\theta(y_i^l|x_i)}{\piref(y_i^l|x_i)}\right)  \right) \right),
\end{equation*}
where $\beta$ is a scalar, $n$ a batch size, $\pi_\theta$ is the parameterised policy and $\piref$ a reference policy, typically obtained after a first step of pre-training and supervised fine-tuning. $f$ is a scalar function; each choice of function $f$ results in a specific given algorithm: for example,  $f(z) = \log\left(1+\exp(-z)\right)$ for DPO; $f(z) = \max\left(0,1-z \right)$ for SLiC; $f(z) = \left(z -1\right)^2 $ for IPO, among other possible alternatives as discussed in \citep{tang2024generalized}.

We argue that preference datasets are expensive to build and do not occur naturally in the wild. Most data coming from user logs is not collected pairwise, but instead comes in the form of a single trajectory.

\textbf{Single-trajectory setting. } Formally, we consider \emph{single-trajectory} datasets of the form $(x_i, y_i, r_i)_{i=1}^N$ where $x_i$ is a prompt, $y_i$ a generation and $r_i$ a scalar reward, collected by some unknown behavior policy.  As a simple example to model the thumbs-up vs. thumbs-down response, we can set a binary reward where $r_i=1$ is for thumbs-up. This formulation can be understood as a special case of the offline RL setup \citep{levine2020offline} tailored to the contextual bandit case for RLHF. 

Perhaps surprisingly, few offline alignment methods consider this setting with the exception of Kahneman-Tversky Optimization (KTO) \citep{kto}. KTO is derived from principles related to utility and prospect theory \citep{kahnemanprospect} that build upon the notion of human risk aversion. KTO also makes strong simplifying assumptions, which as we will show, biases the method to produce suboptimal policies. In contrast, we are interested in deriving a simple, general purpose and performant algorithm, without strong dependency on mathematical assumptions on risk preference or utility.

\paragraph{Online vs. offline algorithms.}
Whilst a natural idea might be to try and use \emph{online} RL (like in \citep{calandriello2024human,guo2024direct}), this would require one to either generate new prompt completions online, or to correct for the distribution of the online policy by using importance sampling. The latter in turn would bring its own set of challenges, e.g., high variance in the importance sampling ratios. In order to circumvent the associated difficulties, we consider the offline setting instead. Additionally, offline RL brings orthogonal benefits of its own, such as simplicity and computational efficiency. Therefore in the following, we present an offline, sound and practical method that approximates the optimal closed-form policy. 

\section{Direct Reward Optimisation (DRO)}
\label{sec:method}

In this section we present the main contribution of this work. We start with some theoretical background on the policy optimisation setting. We then introduce the \algo~objective and discuss a few important theoretical properties, followed by the design of practical algorithms.

\subsection{KL regularised policy optimisation with single-trajectory data}
\label{sec:generaltheory}

Complementary to the pairwise preference setting highlighted above, we now consider the \emph{single trajectory} case. We want to approximate the optimal Kullback-Leibler (KL) regularised policy $\pi^*$, following the canonical formulation of RLHF \citep{christiano2017deep}:
\beq
\pi^*(x)\eqdef \arg\max_\pi \quad \E_{x\sim\rho, y\sim\pi(\cdot|x)}[r(x,y) - \tau \cdot \KL(\pi(\cdot|x) \; || \; \piref(\cdot|x)) ],
\label{eq:setup}
\eeq
where $\piref$ is some initial reference policy, such as the policy obtained after pretraining and supervised fine-tuning. We then have that, necessarily,
\beq \label{eq:reg_solution}
\pi^*(y|x)=\frac{\piref (y|x) e^{\frac 1\tau r(x,y)}}{e^{\frac 1\tau V^*(x)}},
\eeq
where $V^*(x) \eqdef \tau \log \E_{y\sim\piref(\cdot|x)}[e^{\frac 1\tau r(x,y)}]$ is a function that depends on the regulariser $\piref$. Importantly, the normalisation constant, also called partition function, that normalizes the numerator in \eqref{eq:reg_solution} can be explicitly written as $Z=\exp{\frac{1}{\tau}V^*(x)}$. The denominator log-sum-exp as a value function has already appeared in the literature about \emph{soft} reinforcement learning \citep{ziebart2008maximum,haarnoja2017reinforcement,richemond2017short,schulman2018equivalence}. Unlike KTO, which assumes a constant partition function $Z$ for each prompt of the batch, we do not make any assumptions on the form of $Z$ or $V$. Unlike DPO or IPO, where cancellation of the partition function happens due to the difference of rewards in the Bradley-Terry preference model, we do not assume any functional form for $V$. For technically minded readers, we expand on these points in Appendix \ref{sec:appendix-technicalities}. We also note that the form of the partition function is intuitive, if we consider the Legendre-Fenchel conjugate \citep{Bauschke2011ConvexAA} of the KL regulariser in \eqref{eq:setup}. 

\paragraph{The \algo~objective.} Now, we rearrange the optimality condition from above that holds jointly with $\pi=\pi^*,V=V^*$,
\beq \label{eq:setup2}
 r(x,y) - V(x) = \tau \log \frac{\pi(y|x)}{\piref(y|x)}
\eeq
Multiple objectives (and subsequently algorithms) can be derived in order to enforce or approximate this condition, such that we refer to \algo~as a framework encompassing a plurality of algorithms. We can focus in particular on defining the following \algo~loss for any pair of policy and value functions $(\pi, V)$:
\beq
{\cal L}_{\texttt{\algo}}(\pi,V) \eqdef \frac 12 \E_{x\sim\rho, y\sim\mu(\cdot|x)} \left[ \left(  r(x,y) -V(x) - \tau \log\frac{\pi(y|x)}{\piref(y|x)} \right)^2\right].
\label{eq:DRO-V_loss}
\eeq

We begin with an existence and unicity result for the optimum of ${\cal L}_{\texttt{\algo}}$, thereafter denoted as $\mathcal{L}$.
\begin{theorem}\label{th:existence-unicity}
$(\pi^*, V^*)$ is a global optimum of the loss ${\cal L}(\pi,V)$.
In addition, assuming the supports of $\mu$ and $\piref$ coincide, (i.e., for all $x\in\supp \rho$, $\supp(\mu(\cdot|x))=\supp(\piref(\cdot|x))$), then $(\pi^*, V^*)$ is the unique global optimum of the loss ${\cal L}(\pi,V)$.
\end{theorem}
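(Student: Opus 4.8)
The plan is to exploit the fact that $\mathcal{L}$ is, by construction, an expectation of a square, hence nonnegative, and that the residual $r(x,y) - V(x) - \tau\log\frac{\pi(y|x)}{\piref(y|x)}$ that is being squared vanishes identically at $(\pi^*,V^*)$ by the optimality condition \eqref{eq:setup2} (which is just the logarithm of the closed form \eqref{eq:reg_solution}). Hence $\mathcal{L}(\pi^*,V^*)=0=\inf_{\pi,V}\mathcal{L}(\pi,V)$, so $(\pi^*,V^*)$ is a global optimum; this first part does not use the support hypothesis. The substance of the theorem is the uniqueness claim, which is where the support coincidence enters.

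For uniqueness I would take any $(\pi,V)$ with $\mathcal{L}(\pi,V)=0$. Since the integrand is nonnegative and has zero expectation under the joint law of $(x,y)$ with $x\sim\rho$, $y\sim\mu(\cdot|x)$, it is zero almost everywhere for that law; the hypothesis $\supp(\mu(\cdot|x))=\supp(\piref(\cdot|x))$ for every $x\in\supp\rho$ then upgrades this to: for every $x\in\supp\rho$ and every $y\in\supp(\piref(\cdot|x))$,
\beq
r(x,y) - V(x) = \tau\log\frac{\pi(y|x)}{\piref(y|x)},
\eeq
equivalently $\pi(y|x)=\piref(y|x)\,e^{\frac1\tau (r(x,y)-V(x))}$ on that support. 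Summing (or integrating) this identity over $y$ and using that $\pi(\cdot|x)$ is a probability distribution whose total mass over $\supp(\piref(\cdot|x))$ is $1$, one gets $e^{\frac1\tau V(x)}=\E_{y\sim\piref(\cdot|x)}[e^{\frac1\tau r(x,y)}]$, i.e.\ $V(x)=V^*(x)$ for all $x\in\supp\rho$. Substituting back into the displayed identity gives $\pi(y|x)=\pi^*(y|x)$ for all such $x,y$, which is exactly $(\pi,V)=(\pi^*,V^*)$ on the relevant support.

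The step I expect to need the most care is the off-support bookkeeping rather than the algebra: to run the normalisation argument one must know that a candidate optimiser $\pi$ places no mass outside $\supp(\piref(\cdot|x))$. This is automatic once attention is restricted to policies with $\KL(\pi(\cdot|x)\,\|\,\piref(\cdot|x))<\infty$ — equivalently, for which the log-ratio in $\mathcal{L}$ is well defined and finite $\mu$-almost everywhere — since otherwise the squared residual is $+\infty$ on a set of positive $\mu$-measure and $(\pi,V)$ cannot be optimal; I would state this absolute-continuity restriction explicitly. The remaining points are routine: the ``zero expectation $\Rightarrow$ a.e.\ zero'' passage (with the usual conventions for $0\cdot\infty$ and measurability), and noting that ``unique'' here means up to $\rho$-null sets of prompts and up to the irrelevant values of $\pi(\cdot|x)$ off $\supp(\piref(\cdot|x))$.
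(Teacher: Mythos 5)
Your proposal follows essentially the same route as the paper's proof: nonnegativity of the squared residual plus the closed form \eqref{eq:reg_solution} gives $\mathcal{L}(\pi^*,V^*)=0$ and hence global optimality, and for uniqueness you set the residual to zero on the common support of $\mu$ and $\piref$, exponentiate, and use the normalisation of $\pi(\cdot|x)$ to force $V=V^*$ and then $\pi=\pi^*$ — exactly the paper's argument.

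One caveat on your side remark about off-support mass. You are right that the normalisation step needs $\pi(\cdot|x)$ to carry all its mass on $\supp(\piref(\cdot|x))$, and right that this should be stated as an explicit restriction; the paper glosses over it. But your claim that the restriction is \emph{automatic} — because a policy with $\KL(\pi(\cdot|x)\,\|\,\piref(\cdot|x))=\infty$ would make the squared residual $+\infty$ on a set of positive $\mu$-measure — is not correct. Any mass that $\pi$ places outside $\supp(\piref(\cdot|x))$ sits outside $\supp(\mu(\cdot|x))$ as well (the supports coincide), i.e.\ on a $\mu$-null set, so the loss never sees it. Indeed, take $\pi(y|x)=(1-\epsilon)\pi^*(y|x)$ on $\supp(\piref(\cdot|x))$ with mass $\epsilon$ leaked to some $y_0\notin\supp(\piref(\cdot|x))$, and $V(x)=V^*(x)-\tau\log(1-\epsilon)$: the residual is identically zero $\mu$-almost everywhere, so this pair also attains zero loss while differing from $(\pi^*,V^*)$ even in the value of $V$ on-support. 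So the absolute-continuity (or full-support-of-$\piref$) restriction is a genuine hypothesis one must impose, not a consequence of optimality of the loss; with it imposed, your argument — and the paper's — goes through as written.
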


\begin{proof}
From the definition of $\pi^*$ and $V^*$, we have that ${\cal L}(\pi^*,V^*)=0$. Now notice that ${\cal L}(\pi,V)$ is non-negative since it is a sum of quadratic terms $t(x,y)^2$ where $t(x,y)\eqdef r(x,y) - V(x) -\tau \log\frac{\pi(y|x)}{\piref(y|x)}$. Thus $(\pi^*, V^*)$ is a global optimum of ${\cal L}(\pi,V)$. 

Now let us prove that it is unique. Assume there is another global optimum $(\tilde\pi,\tilde V)$ such that ${\cal L}(\tilde \pi,\tilde V)=0$. This means that for all $x\in \supp(\rho)$ and $y\in\supp \mu(\cdot|x)$, its $t(x,y)$-term is zero. Since the support of $\mu$ and $\piref$ coincide, we have that $\forall x\in \supp(\rho), \forall y\in\supp \piref(\cdot|x)$,
$$ r(x,y)-\tilde V(x) -\tau \log\frac{\tilde \pi(y|x)}{\piref(y|x)}  =0, 
$$
from which we deduce that 
$$\tilde \pi(y|x)=\frac{\piref(y|x) e^{\frac 1\tau r(x,y)}}{e^{\frac 1\tau \tilde V(x)}}.$$
But since $\tilde \pi(\cdot|x)$ is a probability distribution, we must have that $\tilde V(x)=\tau \log \sum_y \piref(y|x) e^{\frac 1\tau r(x,y)}$. Thus $\tilde V(x)=V^*(x)$ and $\tilde \pi(y|x)=\pi^*(y|x)$ for all $x\in\supp \rho$,  $y \in \supp(\piref(\cdot|x))$.
\end{proof}

\paragraph{Remarks.}
The significance of Theorem 1 lies in the fact that, though $\pi^*$ and $V^*$ are clearly related via $\log \pi^\ast(y|x) + V^*(x) = \log \pi_\texttt{ref}(y|x) + \frac{1}{\tau}r(x,y)$, there is \emph{no} need to account for that connection during optimisation. We can optimize $\pi$ and $V$ independently and end up finding the optimum. This can greatly simplify algorithmic design in practice, as we will see shortly. 

\paragraph{Role and necessity of the $V$ function. } The loss function requires a joint optimisation over $(\pi,V)$, which means separately parameterising a value function $V$ in addition to the policy $\pi$ so as to be theoretically sound. Thus, \algo~stands in strong contrast to alternative algorithms such as KTO \citep{kto}, which uses a policy loss only without a value function. One natural question arises as to whether it is possible to convert the loss function jointly over $(\pi,V)$ into a loss over $\pi$ alone. This is an idea we explore in Appendix~\ref{sec:stipo-novalue} where we optimise $V$ fully before updating $\pi$.

\subsection{Approximation error}
\label{sec:approx-error}

As alluded to before, we can interpret $V$ as a value function in conjunction with the  policy $\pi$. When fixing $\pi$ we can define the minimising value function $V^\pi= \arg\min_V \L(\pi,V)$. This value function, when fixed to $V=V^\pi$, provides learning signal to policy improvement. Indeed,  since  $(\pi,V)$ can be independently optimised as shown before, we can see that the optimal policy $\pi^*$ is obtained by solving the optimisation problem $\pi^*=\arg\min_\pi \L(\pi,V^\pi)=\arg\min_\pi \arg\min_V \L(\pi,V)$.

However, since in practice the learning of $V\approx V^\pi$ is approximate, the optimisation of $\pi$ against such an approximate value function would induce errors. Formally, we consider what happens when $\pi$ is optimised against a fixed $V$ rather than a well-learned $V^\pi$. The approximate nature of $V$ induces error in $\pi\approx \pi^*$, which we characterise below.

\begin{proposition}\label{prop:approx-error}
Consider an approximation $V(x)$ of the value function $V^\pi=\arg\min_V \L(\pi,V)$: 
$$V^\pi(x)= \E_{y\sim \mu(\cdot|x)} \left[ r(x,y) - \tau \log\frac{\pi(y|x)}{\piref(y|x)} \right]$$ 
and for this function kept fixed, let us optimise the loss $\pi\mapsto \L(\pi, V)$ with respect to the policy only. Let $\pi_V = \arg\min_\pi \L(\pi, V)$. Then we have that $\pi_V$ satisfies the following equation:
\begin{equation}\label{eq:first.claim}
    \pi_V(y|x) \propto \piref(y|x) e^{\frac 1\tau \left[r(x,y) - \frac{\pi_V(y|x)}{\mu(y|x)}\left(V^{\pi_V}(x) - V(x) \right) \right]}.
\end{equation}

In addition we have that for all $x,y$,
\beq\label{eq:bound.optimality}
\left|\log\frac{\pi_V(y|x)}{\pi^*(y|x)}\right|
\leq \frac 2\tau  \max_y \left| \left( V^{\pi_V}(x) - V(x)\right) \left( 1 - \frac{\pi_V(y|x)}{\mu(y|x)}\right) \right|.
\eeq
\end{proposition}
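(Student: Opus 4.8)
The argument has two parts: first derive the fixed-point equation~\eqref{eq:first.claim} from a constrained stationarity condition, and then bootstrap it into the pointwise bound~\eqref{eq:bound.optimality} by exploiting the fact that $\pi_V(\cdot|x)$ is a probability distribution.

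\textbf{Part 1 (stationarity).} The map $\pi(\cdot|x)\mapsto\L(\pi,V)$ decouples across $x$ with positive weights $\rho(x)$, and each prompt carries its own normalisation constraint $\sum_y\pi(y|x)=1$, so I would minimise pointwise via a Lagrange multiplier. Differentiating $\frac12\E_{y\sim\mu(\cdot|x)}\big[(r(x,y)-V(x)-\tau\log\tfrac{\pi(y|x)}{\piref(y|x)})^2\big]$ in $\pi(y|x)$ gives, at the minimiser $\pi_V$,
\[
\tau\,\mu(y|x)\,\frac{r(x,y)-V(x)-\tau\log\frac{\pi_V(y|x)}{\piref(y|x)}}{\pi_V(y|x)}=\nu_x
\]
for a multiplier $\nu_x$ not depending on $y$, i.e. $\tau\log\frac{\pi_V(y|x)}{\piref(y|x)}=r(x,y)-V(x)-\frac{\nu_x}{\tau}\frac{\pi_V(y|x)}{\mu(y|x)}$. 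Taking $\E_{y\sim\mu(\cdot|x)}$ of both sides, recognising the left side as $\E_{y\sim\mu(\cdot|x)}[r(x,y)]-V^{\pi_V}(x)$ and using $\E_{y\sim\mu(\cdot|x)}[\pi_V(y|x)/\mu(y|x)]=\sum_y\pi_V(y|x)=1$, identifies $\nu_x=\tau\,(V^{\pi_V}(x)-V(x))$. Substituting this back and absorbing the $y$-independent factor $e^{-V(x)/\tau}$ into the normalisation yields exactly~\eqref{eq:first.claim}.

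\textbf{Part 2 (bound).} Taking logarithms in~\eqref{eq:reg_solution} gives $\tau\log\pi^*(y|x)=\tau\log\piref(y|x)+r(x,y)-V^*(x)$. Subtracting this from the rearranged stationarity relation of Part~1 and writing $V^*(x)-V(x)=(V^*(x)-V^{\pi_V}(x))+(V^{\pi_V}(x)-V(x))$ produces the exact identity
\[
\tau\log\frac{\pi_V(y|x)}{\pi^*(y|x)}=\underbrace{\big(V^*(x)-V^{\pi_V}(x)\big)}_{=:\,\Delta(x)}+\underbrace{\big(V^{\pi_V}(x)-V(x)\big)\Big(1-\frac{\pi_V(y|x)}{\mu(y|x)}\Big)}_{=:\,g(y|x)}.
\]
Hence $\pi_V(y|x)=\pi^*(y|x)\,e^{(\Delta(x)+g(y|x))/\tau}$; summing over $y$ and using that $\pi_V(\cdot|x)$ and $\pi^*(\cdot|x)$ both sum to $1$ gives $e^{-\Delta(x)/\tau}=\E_{y\sim\pi^*(\cdot|x)}[e^{g(y|x)/\tau}]$. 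Since $|g(y|x)|\le M(x):=\max_y|g(y|x)|$, the right-hand side lies in $[e^{-M(x)/\tau},e^{M(x)/\tau}]$, so $|\Delta(x)|\le M(x)$. The triangle inequality then gives $\tau\,\big|\log(\pi_V(y|x)/\pi^*(y|x))\big|=|\Delta(x)+g(y|x)|\le|\Delta(x)|+|g(y|x)|\le 2M(x)$, which is~\eqref{eq:bound.optimality}.

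\textbf{Expected main obstacle.} The computations themselves are routine; the one genuinely load-bearing idea is that, although $\pi_V$ and $V^{\pi_V}$ are only pinned down implicitly through the fixed point~\eqref{eq:first.claim}, the constraint that $\pi_V(\cdot|x)$ is a distribution — compared against the explicit closed form of $\pi^*$ — is precisely what converts the pointwise error term $g(y|x)$ into control of the global gap $\Delta(x)=V^*(x)-V^{\pi_V}(x)$, via the elementary sandwiching of a log-sum-exp between its extremal values. Secondary care is needed to justify the pointwise Lagrangian step (legitimate because $\L$ separates over $x$ with positive weights and the constraints are per-prompt) and to state the support assumptions (matching supports of $\mu$ and $\piref$, as in Theorem~\ref{th:existence-unicity}) under which the logarithms and the ratios $\pi_V/\mu$ are well defined.
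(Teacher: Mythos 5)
Your proposal is correct and follows essentially the same route as the paper's proof: a per-prompt Lagrangian stationarity condition whose multiplier is identified as $\tau\,(V^{\pi_V}(x)-V(x))$ by averaging under $\mu$, followed by comparison with the closed form of $\pi^*$ and a normalisation-based sandwich plus triangle inequality to get the factor $2$ bound. Your $\Delta(x)=V^*(x)-V^{\pi_V}(x)$ is exactly the paper's $\tau\sum_y \mu(y|x)\log\frac{\pi_V(y|x)}{\pi^*(y|x)}$, so the only differences are cosmetic reorganisations of the same argument.
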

\begin{proof}
We delay the full technical proof to the Appendix, see section \ref{sec:appendix-proofs}.
\end{proof}
Two remarks are in order here:
\begin{itemize}[leftmargin=*]
\item \eqref{eq:bound.optimality} says that if our approximation $V$ is close to the value function of $\pi_V$ (in some sense, the "Bellman residual" $V-V^{\pi_V}$ of $V$ is small), then $\pi_V$ is close to $\pi^*$.
\item In particular, if we consider the best state-independent baseline $V^\pi_c = \arg\min_{V:V(x)\equiv c} {\cal L}(\pi,V)= \E_{x\sim \rho}[V^{\pi}(x)]$ (this could be estimated using a large enough batch size), we see that we would recover the optimal policy $\pi^*$ if $V^{\pi^*}(x)=V^{\pi^*}_c$ for all $x$.
\end{itemize}

\subsection{Practical Implementation}
\label{sec:algorithms}

We consider a parametric policy $\pi_\theta$ and a parametric function $V_\phi$ with neural network parameters $\theta,\phi$ respectively. We assume access to data in the form of tuples ${(x_i,y_i,r_i)_{1\leq i\leq n}}$ where $x_i\sim\rho$ are prompts, $y_i\sim \mu(\cdot|x_i)$ are prompt-conditional generations associated with a model policy $\mu$, and $r_i = r(x_i,y_i)$ are scalar reward functions for the individual prompt-completion combination $(x_i, y_i)$. We perform gradient descent, both on $\theta$ and $\phi$, to minimise the empirical loss:
$$\hat {\cal L}(\theta,\phi) \eqdef \frac 12 \sum_{i=1}^n \left( r(x_i,y_i)  -V_\phi(x_i) -\tau \log\frac{\pi_\theta(y_i|x_i)}{\piref(y_i|x_i)}\right)^2.
$$
We can define explicitly the gradient w.r.t.~$\phi$ and $\theta$: the gradient w.r.t.~the parameter $\phi$ of the value function is:
\begin{equation}\label{eq:stipov-value}
\nabla_\phi \hat{\mathcal{L}}(\theta, \phi) =   \sum_{i=1}^n\left(     V_{\phi}(x_i) - r(x_i, y_i) +\tau\log\left(\frac{\pi_\theta(y_i|x_i)}{\piref(y_i|x_i)}\right) \right) \nabla_\phi V_\phi(x_i),
\end{equation}
and the policy gradient:
\begin{align}\label{eq:stipov-policy}
\nabla_\theta \hat {\cal L}(\theta,\phi) &= -\tau \sum_{i=1}^n\Big(\nabla_\theta\underbrace{ \log \pi_\theta(y_i|x_i)\left( r(x_i,y_i)- V_\phi(x_i)\right) }_{\texttt{policy optimisation loss}} - \frac{\tau}{2} \nabla_\theta \underbrace{\Big( \log\frac{\pi_\theta(y_i|x_i)}{\piref(y_i|x_i)}\Big)^2}_{\texttt{$\ell_2$-regularisation loss}}\Big).
\end{align}
The policy optimisation loss is similar to a standard policy gradients RL algorithm, however it is important to notice a few key differences and connections between the loss above and policy gradient algorithms.

\textbf{Understanding the value function $V$. } First, when considering a fixed policy $\pi$, learning the value function $V_\phi$ corresponds to learning the function $V^{\pi}$ whose approximation error we discussed earlier in Section \ref{sec:approx-error}.

Second, the value function $V_\phi(x_i)$ which is subtracted from the reward $r(x_i,y_i)$ is not simply a baseline used for the purpose of variance reduction as is usually the case in RL \citep{Sutton00policy,konda1999actor}. Indeed removing this term (or replacing it by any function $V(x_i)$) would bias our policy gradient estimate. The reason for that being that the samples $y_i$ are off-policy, i.e., drawn from $\mu$ and not from $\pi_\theta$, thus in general, $\E_{y\sim\mu(\cdot|x_i)}[\nabla_\theta \log \pi_\theta(y|x_i) V(x_i)]\neq 0$ (whereas if on-policy we would have $\E_{y\sim\pi_\theta(\cdot|x_i)}[\nabla_\theta \log \pi(y|x_i) V(x_i)]= 0$). Thus it is important to maintain this value function estimate in the policy gradient.

\textbf{Offline regularisation. } Another difference compared to a usual regularised PG algorithm is the use of a $\ell_2$-regularisation loss instead of a KL-regularisation loss. Notice that these two regularisation losses do not lead to equivalent gradients for the reason that the samples $y_i$ are off-policy: indeed we have (see, e.g., also \citep{calandriello2024human,tang2024generalized}  for discussion of similar theoretical results)
\begin{align*}
\frac 12 \E_{y\sim\mu(\cdot|x_i)}\Big[ \nabla_\theta \Big( \log \frac{\pi_\theta(y|x_i)}{\piref(y|x_i)}\Big)^2\Big]\neq \nabla_\theta \KL(\pi_\theta(\cdot|x_i), \piref(\cdot|x_i)),
\end{align*}
unless the sampling is on-policy $\mu=\pi_\theta$, where both losses would would lead to the same gradient. Thus we could see the policy gradient update rule \eqref{eq:stipov-policy} as the natural extension of a usual on-policy regularised PG algorithm to the off-policy case.

\textbf{Policy learning rate rescaling. } Finally, in practice, we rescale the policy gradient \eqref{eq:stipov-policy} by multiplying the update by a factor of $1/\tau$. We found this works better empirically and we hypothesised that the global loss $\hat {\cal L}(\theta,\phi)$ may be ill-conditioned (as its sensitivities w.r.t.~the dimensions $\theta$ and $\phi$ are different, leading to a high condition number). We refer to this algorithm as the \emph{Direct Reward Optimisation with Value} algorithm, or \texttt{\algo-V}. \texttt{\algo-V} is described in Algorithm \ref{alg:algo-withvalue}.

\begin{algorithm}[h!]
   \caption{\texttt{D}irect \texttt{R}eward \texttt{O}ptimisation with \texttt{V}alue Algorithm (\texttt{\algo-V})}
   \label{alg:algo-withvalue}
\begin{algorithmic}
   \STATE {\bfseries Inputs:} A single-trajectory  dataset: $(x_i, y_i, r_i=r(x_i, y_i))_{i=1}^N$, a parameterised policy: $\pi_\theta$, a reference policy: $\piref$, a parameterised value function: $V_\phi$, a regularisation scalar $\tau$, a number of total steps $K$, a batch size $B$ and an optimiser.  

   \FOR{$k=1$ {\bfseries to} $K$}
   \STATE Sample uniformly a batch: $(x_i, y_i, r_i)_{i=1}^B$
   \STATE Compute gradient updates $\nabla_\theta \hat {\cal L}(\theta,\phi)$ and $\nabla_\phi \hat {\cal L}(\theta,\phi)$ as in  \eqref{eq:stipov-policy} and \eqref{eq:stipov-value}: 
   
   $$\nabla_\theta \hat {\cal L}(\theta,\phi) = -\frac{1}{B} \sum_{i=1}^B\Big(\nabla_\theta \log \pi_\theta(y_i|x_i)\left( r(x_i,y_i)- V_\phi(x_i)\right) - \frac{1}{2} \nabla_\theta \Big( \log\frac{\pi_\theta(y_i|x_i)}{\piref(y_i|x_i)}\Big)^2 \Big) $$
   
   $$\nabla_\phi \hat{\mathcal{L}}(\theta, \phi) = \frac{1}{B}  \sum_{i=1}^B\left(     V_{\phi}(x_i) - r(x_i, y_i) +\tau\log\left(\frac{\pi_\theta(y_i|x_i)}{\piref(y_i|x_i)}\right) \right) \nabla_\phi V_\phi(x_i)$$
   
   \STATE Update the policy parameters: $\theta \gets \texttt{UpdateOptimiser}(\theta, \nabla_\theta \hat {\cal L}(\theta,\phi))$
   \STATE Update the value parameters: $\phi \gets \texttt{UpdateOptimiser}(\phi, \nabla_\phi \hat {\cal L}(\theta,\phi))$

   \ENDFOR
   \STATE {\bfseries Outputs:} $\pi_\theta$
\end{algorithmic}
\end{algorithm}

\textbf{Offline optimisation. } The dataset of (prompt, completion, reward) triplets remains static during optimisation. This is because we do not use the current "online" policy, parameterised by the most recent $\theta$ parameters, to regenerate completions for a given prompt. In this regard, our optimisation is performed like in offline reinforcement learning, where taking new actions in the environment is structurally prohibited. We note that our method does not require the training of an additional reward model. The distinction between \emph{offline} and \emph{online} procedures is important in RL(HF), since they can give rise to different gradients and correspond to very different theoretical justifications, as studied in e.g. \citet{calandriello2024human,guo2024direct}. Given the additional $\tau$ regularisation term appearing in \eqref{eq:stipov-policy}, we describe \algo~as offline, regularised reinforcement learning for large language model alignment.

\textbf{Neural network implementation. } It is natural to wonder whether $\theta$ and $\phi$ need be separate or whether \emph{parameter sharing} can occur for efficient learning. Perhaps counter-intuitively, we found that using two separate networks, one for $\pi_\theta$ and one for $V_\phi$, was beneficial empirically, compared to using policy logits as value outputs. Another implementation decision, this time strictly related to $V_\phi$, is whether to use a single value per batch or a value per token. We found that using a single value per batch hurts performance. In both instances parameter sharing is detrimental.

As such, by default, we implemented \texttt{DRO-V} using two networks, as well as multiple values across the batch. We will return to these points and ablate these design choices in the Experiments section next. 

\section{Experiments}
\label{sec:experiments}

We now present our empirical results on finetuning LLMs using \texttt{DRO-V}.

\textbf{Models and task dataset. } In all that follows, we perform our test experiments using the \emph{UltraFeedback} dataset described in ~\citet{cui2023ultrafeedback}. We preprocess the data to create an offline dataset of triplets (prompt, completion, reward) where the prompt might be shared across multiple triplets. We also normalize the dataset such that the rewards have mean 0 and variance 1 across the dataset.

We use T5 large language models~\citep{JMLR:v21:20-074}, a family of auto-regressive transformers with an encoder-decoder architecture, in order to train all our models. The details of the models architecture and software implementation can be found in~\citep{roberts2022t5x}. Furthermore, our checkpoints are initialised from instruction finetuning according to the FLAN recipe \citep{chung2022scaling}. We denote this initialisation policy ($\piref$ as earlier), as the SFT, for supervised finetuning policy. Depending on the experiment, we use either a \emph{large} (L) or an \emph{extra-large} (XL) encoder-decoder model. The large model sports $770M$ parameters, whereas the XL model has $3B$ parameters.

\textbf{Evaluation. } Following now standard practice \citep{zheng2023judging}, our evaluation pipeline consists in automated evaluation; specifically, side-by-side comparison. We use the PaLM2~\citep{anil2023palm} LLM as a judge. Given a test set of prompts, for each pair of trained policies, we sample completion responses, and then ask PaLM2 to judge which one is better. The format of the evaluation prompt we use for side-by-side comparison is as follows:

\texttt{\small In this task, you will be provided with an instruction and two responses. Your job is to assess the helpfulness and fulfillment of two responses A and B.}

\texttt{\small Instruction: {article} response A (left):{summary1} response B (right):{summary2}}

For each model, we record a checkpoint every $2,000$ training steps, before selecting the best checkpoint across the training curve as determined by side-by-side comparison against the SFT policy.  We then use this best checkpoint as representative of an algorithm or set of parameters.

\textbf{Compute and hyperparameters. } For compute, we use version $5$ ('\emph{v5e}') Tensor Processing Units \citep[TPUs; ][]{Jouppi2023TPUVA}, in the cloud. We train \emph{large} encoders in configurations of $4 \times 4$ devices and \emph{XL} encoders in configurations of $4 \times 8$ devices. With this computational setup we obtain speeds of around $0.5$ training steps per second ($21$ hours per $40,000$ steps) for the \emph{large} encoders, and $0.1$ training steps per second ($2$ days per $20,000$ steps) for the \emph{XL} encoders. We run our experiments with default learning rate $1$e-$4$ both for the value and the policy networks, and a default total of $40,000$ training steps for $L$ models and $20,000$ training steps for $XL$ models, using a batch size of $32$. The optimiser we use is AdaFactor \citep{Shazeer2018AdafactorAL} with a decay rate value of $0.8$. For the learning rate, we employ $150$ linear warmup steps.

\textbf{KTO baseline. } We compare our algorithm primarily to the strong Kahneman-Tversky Optimization (KTO) baseline of \citet{kto}. We need a criterion to decide which sample completions are desirable ('thumbs up') or not. We choose to do so per-batch, based on whether each attached scalar reward is greater or less than the average reward seen over the minibatch. We use the same KL divergence estimator on the batch as they do, and do not weigh the loss in any manner, since we are already compensating for batch statistics by computing the batchwise average reward. For both algorithms KTO and \texttt{DRO-V}, the $\tau$ regularisation factor (denoted by $\beta$ in \citet{kto}) is held constant throughout training.

\subsection{Results and Ablations}
\label{exp:results-ablations}

Here we show our empirical results as well as perform ablations and study the impact of hyperparameters on performance.

\textbf{Empirical results. }
\texttt{DRO-V} outperforms KTO in side-by-side comparison, both for the T5-L and XL encoders. In the interest of fair comparison, we optimise our baseline as much as possible and present the best KTO results we obtained after search over three values of the $\tau$ regularisation parameter, $\tau \in \{0.1, 1.0, 5.0 \}$ ($\tau=5.0$ being optimal). By contrast we show our \texttt{DRO-V} results without such search, simply setting $\tau$ to a default intuitive value of $1.0$. In this section only, we give standard deviation estimates, thanks to computing averages of scores obtained by side-by-side comparison over $5$ evaluation folds of $1,000$ test prompts each:

\begin{figure}[!hb]\label{fig:generalperf}
\centering
\begin{minipage}{0.4\textwidth}
\centering

\begin{tabular}{c|c}
\toprule
Side-by-side Winrate & (first over second) \\
\midrule
\texttt{DRO-V} vs SFT: &  $78.9 \% \pm 0.3\%$ \\
KTO vs SFT: &  $67.5 \% \pm 0.7\%$  \\
\midrule
\texttt{DRO-V} vs KTO: &  $63.4 \% \pm 1.0\%$\\
\bottomrule
\end{tabular}

\caption{Winrates with T5-L encoders.}
\end{minipage}
\hfill
\begin{minipage}{0.45\textwidth}
\centering

\begin{tabular}{c|c}
\toprule
Side-by-side Winrate & (first over second) \\
\midrule
\texttt{DRO-V} vs SFT: & $81.5 \% \pm 1.0 \%$ \\
KTO vs SFT: & $78.2 \% \pm 0.7 \%$ \\
\midrule
\texttt{DRO-V} vs KTO: & $57.5 \% \pm 0.8 \%$ \\
\bottomrule
\end{tabular}

\caption{Winrates with T5-XL encoders.}
\end{minipage}
\end{figure}

We see significant empirical outperformance of \texttt{DRO-V}. Both on T5-L and T5-XL encoders, \texttt{DRO-V} clearly outperforms the SFT ($78.9\%$ and $81.5\%$, respectively). Most importantly, \texttt{DRO-V} also wins over the KTO baseline in direct comparison ($63.4\%$ and $57.5\%$ respectively). We also give qualitative examples of this difference, with sample completions from both algorithms, in Appendix \ref{sec:appendix-qualitative} where we demonstrate how these numbers correspond to more helpful and focused prompt completions.

We now study ablations over hyperparameters and architecture. In all that follows, we retain the same experimental protocol as above, and here use T5 \emph{large} encoders exclusively.

\textbf{Impact of the learning rate for policy and value. }
We begin by examining the impact of the learning rate on downstream performance. We study two cases: first varying jointly the learning rate of the policy and the value network, and second, switching the learning rate of the value network only. We pick our learning rates to be one of $1$e-$5$, $5$e-$5$, $5$e-$4$, or the default $1$e-$4$. Results are presented in Figure \ref{fig:learningrates}. For the value only sweep, we observe that changing the value learning rate parameter alone yields a small impact. Thus the policy learning rate rescaling factor, $1/\tau$ (Equation \ref{eq:stipov-policy}) is all the more important. We do also note a small yet monotonic improvement in learning the value $V$ faster than the policy. Overall, the performance of \texttt{DRO-V} remains very stable within an order of magnitude change for learning rates.

\begin{figure}[!ht]
\centering
\begin{minipage}{0.99\textwidth}
\centering
\begin{tabular}{c|ccccc}
\toprule
\texttt{DRO-V} learning rate value & $1$e$-5$ & $5$e-$5$ & $1$e-$4$ & $5$e-$4$ \\
\midrule
Joint LR : Winrate vs SFT & $73.7 \%$ & $78.7 \%$ & $\bf{78.9 \%}$ & $78.4 \%$ \\
LR, value $V$ only : Winrate vs SFT & $76.8 \%$ & $78.0 \%$ & $78.9 \%$ & $\bf{79.1 \%}$ \\
\bottomrule
\end{tabular}

\end{minipage}
\caption{\emph{Top line:} Impact of varying jointly the $\pi$ and $V$ learning rate parameter. \emph{Bottom line:} Impact of value function $V$ learning rate parameter only.}
\label{fig:learningrates}
\end{figure}

\textbf{Impact of regularisation parameter $\tau$. } Similarly, we vary the strength of regularisation parameter $\tau$ both for \texttt{DRO-V} and KTO. We pick between three values: $0.1$, $1.0$ and $5.0$. Results are presented in Figure \ref{fig:tauparam}. We see substantial variation in performance due to this parameter. A regularisation parameter of $1.0$, an intuitive value, is actually best for \texttt{DRO-V}. On the other hand, we found it far from optimal for KTO and picked the best value, $5.0$, instead. These experiments were performed using T5-L encoders, and we re-used those $\tau$ choices for T5-XL experiments as well.

\begin{figure}[!ht]
\centering
\begin{minipage}{0.45\textwidth}
\centering
\begin{tabular}{c|cccc}
\toprule
\texttt{DRO-V $\tau$} & $0.1$ & $1.0$  & $5.0$ \\
\midrule
Winrate vs SFT  & $70.5 \%$ & $\bf{78.9 \%}$ & $76.6 \%$ \\
\bottomrule
\end{tabular}
\end{minipage}
\hfill
\begin{minipage}{0.45\textwidth}
\centering
\begin{tabular}{c|cccc}
\toprule
KTO $\tau$ & $0.1$ & $1.0$ & $5.0$ \\
\midrule
Winrate vs SFT & $63.5 \%$ & $61.9 \%$ & $\bf{67.5 \%}$ \\
\bottomrule
\end{tabular}
\end{minipage}
\caption{\emph{Left:} Impact of $\tau$ parameter on \texttt{DRO-V}. \emph{Right:}  Impact of $\tau$ parameter on KTO.}
\label{fig:tauparam}
\end{figure}

\textbf{Impact of parameter sharing. } Finally, we investigate the quantitative impact of parameter sharing for \texttt{DRO-V}, as exposed in Section \ref{sec:algorithms}. We jointly study the performance of the single or double network version, as well as whether to use a single value number per batch or not. For computational reasons, these experiments at performed using $10,000$ steps of training only (and therefore slightly undertrained compared to our main T5-L $78.9\%$ result). Results are presented Figure \ref{fig:ablation-paramshare}, with the full comparison matrix in Appendix \ref{sec:appendix-experiments}. The impact of full parameter sharing is material, with most of the hit coming from the single or double network choice. However, we also observe that when using two networks for $\pi$ and $V$, there are significant gains in not using a single value per batch ($76.6\%$ against $72.1\%$ winrate vs SFT, a difference of $4.5\%$, confirmed in direct side-by-side comparison of those two variants yielding $54.9\%$ in favour of the multiple value version). These observations help explain some of the outperformance of our method.

\begin{figure}[!ht]
\centering
\begin{minipage}{0.99\textwidth} 
\centering

\begin{tabular}{c|cccc}
\toprule
Parameter Sharing Variant & \makecell[c]{Double Net\\Single Value} & \makecell[c]{Single Net\\Single Value} & \makecell[c]{Single Net\\Multiple Values} & \makecell[c]{Double Net\\Multiple Values} \\
\midrule
Winrate vs SFT & $72.1 \%$ & $57.6 \%$ & $55.5 \%$ & $\bf{76.6 \%}$ \\
\bottomrule
\end{tabular}

\caption{Parameter sharing variants. \texttt{DRO-V} Winrate vs SFT.}
\label{fig:ablation-paramshare}
\end{minipage}
\end{figure}

\section{Related work}
\label{sec:related}

\textbf{Human Feedback in Reinforcement Learning. } Integrating human feedback into reinforcement learning, as introduced by \citet{christiano2017deep}, has rapidly grown to be considered essential for improving the practical utility of LLMs and mitigating their epistemic risk \citep{hannigan2024}. RLHF may not always yield direct improvements in benchmark performance \citep{Touvron2023Llama2O}, but it significantly enhances human-centric applications like dialogue systems \citep{nakano2021webgpt, InstructGPT} and extends to non-human-centric tasks too like MuJoCo physics \citep{yuan2024unirlhf} and robotics \citep{gao2024rebel}. Precursors to RLHF were implemented with Deep Q Networks \citep{Mnih13Playing} and Actor-Critic algorithms \citep{Mnih2016asynchronous, glaese2022improving}, and RLHF itself was debuted with proximal policy optimisation \citep{PPO}. Given the complexity of RLHF techniques and algorithms used \citep{casper2023open}, the research community is revealing surprising benefits of simpler strategies like REINFORCE \citep{ahmadian2024basics}, sequence likelihood \citep{zhao2023slichf}, and ranking approaches \citep{Dong2023RAFTRR, Yuan2023RRHFRR}. 

\textbf{Advancements in Policy Optimisation. } Assuming a Bradley-Terry model \citep{bradley1952rank} for human reward modeling enables the RLHF problem to be reformulated as a supervised learning task \citep{rafailov2023direct}; this may provide greater training stability \citep{zhao2023beyond} and data efficiency, although this point is debated \citep{xu2024dpo}. Recent research has focused on enhancing the direct preference optimization (DPO) methods for scalability \citep{tunstall2023zephyr, Ivison2023CamelsIA} and safety \citep{liu2024enhancing} while expanding their mathematical underpinnings \citep{azar2023general, Wang2023BeyondRK, exo}. Other work constrains DPO’s contextual focus \citep{zeng2024tokenlevel} or explores alternatives to preference models such as Nash equilibria \citep{munos2023nash, rosset2024direct}. Despite these advancements, both DPO and RLHF encounter challenges such as reward hacking \citep{Pang2022RewardGI, Skalse2022DefiningAC}, length bias \citep{park2024disentangling}, and overoptimisation \citep{Amodei2016ConcretePI, Pan2022TheEO}, which can in turn lead to under-regularised models \citep{Gao2022ScalingLF, Singhal2023ALW} and objective mismatch \citet{kirk2023understanding}. While techniques like ensembling can alleviate issues of alignment \citep{Eisenstein2023HelpingOH, Rame2024WARMOT} and overoptimisation \citep{Wortsman2022ModelSA, Coste2023RewardME}, they may be overlooked due to their computational intensity \citep{zhang2024improving}.

\textbf{Different data types. } RLHF classically focuses on a choice that human raters make between two outputs produced from a single input to the policy.  Collecting this data type can be costly, motivating the use of cheaper data, such as upvote/downvote point-wise data \citep{kto}, or joint preferences from different questions \citep{bansal2024comparing}. Such data can also be noisy, motivating factual augmentation \citep{sun2023aligning}, fine-grained responses \citep{wu2023finegrained}.

\textbf{Online vs. Offline Learning and Alignment through Self-Play. } The practical distinction between online and offline methods \citep{jaques2019way} appears more relevant than the reinforcement versus supervised learning dichotomy. Online policies risk deviating from the original data distribution, causing shifts and potential issues \citep{Zhuang2021ConsequencesOM,Shin2023BenchmarksAA}. Alternatively, alignment can be achieved through self-play in a two-player game framework \citep{munos2023nash, Swamy2024AMA}, encompassing both online and offline settings and allowing smooth transitions between them. Similarly, iterative DPO has demonstrated alignment improvement \citep{yuan2024selfrewarding}, building upon established techniques like reinforcement learning from AI feedback \citep{Bai2022ConstitutionalAH, lee2023rlaif}.

\section{Conclusion and limitations}

We have introduced \algo, a new framework for aligning LLMs in the setting of single-trajectory datasets, where for each prompt a single completion associated to a scalar reward is available. \algo~moves away from the traditional preference setting in RLHF. In doing so it not only makes the training of an explicit reward model redundant, but also and most importantly enables leveraging potentially orders of magnitude more data coming from user feedback instead of raters'. \algo~is theoretically principled since it learns both a policy and a value. Thus it does not rely on any simplifying mathematical assumptions, such as a uniform value function on the batch, or the cancellation of the partition function that typically underpins RLHF methods. Using T5 \emph{large} and \emph{XL} text encoders, we have shown that these properties translate into strong performance on the \emph{UltraFeedback} dataset when compared to Kahneman-Tversky optimisation. However, our empirical study is limited, both in terms of number of tasks and scale. Further work is required to more broadly establish the performance gains that our approach provides when considering the largest language models, as it is able to leverage large amounts of user-generated data.

\section*{Acknowledgements}
We would like to thank the whole Google DeepMind team for providing the infrastructure to make this work possible. In particular, we would like to thank Matt W. Hoffman, Bobak Shahriari, Nikola Momchev, Sertan Girgin and Piotr Stanczyk for their support in building the coding infrastructure and Doina Precup and Olivier Bachem for their guidance and continual support. 

\newpage

\bibliography{biblio}
\bibliographystyle{plainnat}


\newpage
\appendix
\section*{\centering APPENDICES}
\section{Mathematical Proofs}\label{sec:appendix-proofs}

We recall Proposition \ref{prop:approx-error} and proceed to prove it here:

\begin{proposition*}
Consider an approximation $V(x)$ of the value function $V^\pi=\arg\min_V \L(\pi,V)$: 
$$V^\pi(x)= \E_{y\sim \mu(\cdot|x)} \left[ r(x,y) - \tau \log\frac{\pi(y|x)}{\piref(y|x)} \right]$$ 
and for this function kept fixed, let us optimise the loss $\pi\mapsto \L(\pi, V)$ with respect to the policy only. Let $\pi_V = \arg\min_\pi \L(\pi, V)$. Then we have that $\pi_V$ satisfies the following equation:
\begin{equation}\label{eq:first.claim2}
    \pi_V(y|x) \propto \piref(y|x) e^{\frac 1\tau \left[r(x,y) - \frac{\pi_V(y|x)}{\mu(y|x)}\left(V^{\pi_V}(x) - V(x) \right) \right]}.
\end{equation}

In addition we have that for all $x,y$,
\beq
\left|\log\frac{\pi_V(y|x)}{\pi^*(y|x)}\right|
\leq \frac 2\tau  \max_y \left| \left( V^{\pi_V}(x) - V(x)\right) \left( 1 - \frac{\pi_V(y|x)}{\mu(y|x)}\right) \right|.
\eeq
\end{proposition*}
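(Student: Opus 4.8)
The plan is to treat the inner minimisation over $\pi$, with $V$ held fixed, as a constrained problem carrying one simplex constraint $\sum_y \pi(y|x)=1$ per prompt $x$, extract its stationarity condition, and derive both claims from that single identity. For fixed $x$ the objective decouples across prompts; writing $t(x,y)\eqdef r(x,y)-V(x)-\tau\log\frac{\pi(y|x)}{\piref(y|x)}$, I would introduce a multiplier $\lambda(x)$ and differentiate $\frac12\sum_y\mu(y|x)t(x,y)^2+\lambda(x)\big(\sum_y\pi(y|x)-1\big)$ in $\pi(y|x)$, using $\partial t(x,y)/\partial\pi(y|x)=-\tau/\pi(y|x)$. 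This gives $-\tau\frac{\mu(y|x)}{\pi_V(y|x)}t_V(x,y)+\lambda(x)=0$, i.e.\ $t_V(x,y)=\frac{\lambda(x)}{\tau}\cdot\frac{\pi_V(y|x)}{\mu(y|x)}$ at the optimum. Taking $\E_{y\sim\mu(\cdot|x)}$ of both sides, using $\sum_y\pi_V(y|x)=1$ and the identity $V^{\pi_V}(x)=\E_{y\sim\mu(\cdot|x)}\big[r(x,y)-\tau\log\frac{\pi_V(y|x)}{\piref(y|x)}\big]$ from the statement, yields $\lambda(x)/\tau=V^{\pi_V}(x)-V(x)$; substituting back and exponentiating, with the $y$-independent factor $e^{-V(x)/\tau}$ absorbed into the normalisation, gives exactly \eqref{eq:first.claim2}.

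For the second claim I would rewrite \eqref{eq:reg_solution} as $\tau\log\frac{\pi^*(y|x)}{\piref(y|x)}=r(x,y)-V^*(x)$ and subtract it from the identity $\tau\log\frac{\pi_V(y|x)}{\piref(y|x)}=r(x,y)-V(x)-\frac{\pi_V(y|x)}{\mu(y|x)}\big(V^{\pi_V}(x)-V(x)\big)$ obtained above. The reward terms cancel, and after splitting $V^*(x)-V(x)=\big(V^*(x)-V^{\pi_V}(x)\big)+\big(V^{\pi_V}(x)-V(x)\big)$ I get, for every $y$ in the common support, $\tau\log\frac{\pi_V(y|x)}{\pi^*(y|x)}=c(x)+a(x,y)$, where $c(x)\eqdef V^*(x)-V^{\pi_V}(x)$ and $a(x,y)\eqdef\big(V^{\pi_V}(x)-V(x)\big)\big(1-\frac{\pi_V(y|x)}{\mu(y|x)}\big)$ is exactly the quantity inside the max of the claimed bound.

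It remains to control the prompt-level offset $c(x)$, which is the crux. Since $\pi_V(\cdot|x)$ and $\pi^*(\cdot|x)$ are probability distributions on the same support, the function $y\mapsto c(x)+a(x,y)=\tau\log(\pi_V(y|x)/\pi^*(y|x))$ can be neither strictly positive for all $y$ (that would force $\pi_V$ to dominate $\pi^*$ pointwise, contradicting that both sum to one) nor strictly negative for all $y$. Hence there exist $y_\pm$ with $c(x)+a(x,y_+)\ge0$ and $c(x)+a(x,y_-)\le0$, which give $-\max_y|a(x,y)|\le c(x)\le\max_y|a(x,y)|$, i.e.\ $|c(x)|\le\max_y|a(x,y)|$. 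A triangle inequality then yields $\big|\tau\log\frac{\pi_V(y|x)}{\pi^*(y|x)}\big|=|c(x)+a(x,y)|\le|c(x)|+|a(x,y)|\le2\max_y|a(x,y)|$, and dividing by $\tau$ is precisely \eqref{eq:bound.optimality}.

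The genuinely delicate point is the stationarity step: the map $\pi(y|x)\mapsto(c-\tau\log\pi(y|x))^2$ is not convex, so one should check that $\pi_V$ is an interior (strictly positive) minimiser on the simplex — which follows because the exponential form is positive on $\supp\piref(\cdot|x)$ — and that complementary slackness leaves a single multiplier $\lambda(x)$ per prompt rather than active non-negativity constraints, justifying the plain first-order condition used above. Everything downstream is algebra, and the sign-change argument bounding $c(x)$ is elementary once the identity of the second paragraph is in hand.
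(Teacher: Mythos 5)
Your proposal is correct and follows essentially the same route as the paper's proof: a per-prompt Lagrangian stationarity condition identified as $\lambda(x)=\tau\left(V^{\pi_V}(x)-V(x)\right)$ for the first claim, then subtracting the optimal-policy identity to write $\tau\log\frac{\pi_V(y|x)}{\pi^*(y|x)}$ as a prompt-level offset plus the term $\left(V^{\pi_V}(x)-V(x)\right)\left(1-\frac{\pi_V(y|x)}{\mu(y|x)}\right)$, bounding the offset by the max of that term, and finishing with a triangle inequality. The only minor deviation is how the offset is controlled: you use an elementary sign-change argument on $\log\frac{\pi_V(y|x)}{\pi^*(y|x)}$ (it cannot have constant sign since both policies are normalised), while the paper exponentiates and uses $1=\sum_y \pi^*(y|x)\,e^{\log\frac{\pi_V(y|x)}{\pi^*(y|x)}}$ together with the pointwise bound --- both arguments rest on the same normalisation fact and yield the identical bound.
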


\begin{proof} We proceed to prove this here.
Consider the Lagrangian
${\cal G}(\pi) = {\cal L}(\pi,V) + \sum_x\rho(x)\lambda_x(\sum_y \pi(y|x)-1)$ where $\lambda_x$ are the Lagrange multipliers corresponding to the constraint $\sum_y \pi(y|x)=1$. Notice there is no need to impose such constraint outside of the support of $\rho$. The derivative of ${\cal G}(\pi)$ w.r.t.~a variable $\pi(y|x)$ is
\begin{eqnarray*}
\partial_{\pi(y|x)}{\cal G}(\pi) &=&  \E_{x'\sim\rho, y'\sim\mu(\cdot|x')} \left[ \left(   r(x',y') - \tau \log\frac{\pi(y'|x')}{\piref(y'|x')} - V(x') \right) \partial_{\pi(y|x)}\left(-\tau \log\frac{\pi(y'|x')}{\piref(y'|x')}\right) \right] \\
& & + \partial_{\pi(y|x)}\left( \sum_{x'}\rho(x')\lambda_{x'}\sum_{y'}\pi(y'|x')\right)\\
&=&  - \rho(x)\left[ \mu(y|x) \left(  r(x,y) - \tau \log\frac{\pi(y|x)}{\piref(y|x)} - V(x) \right) \frac{\tau}{\pi(y|x)}  - \lambda_x\right].
\end{eqnarray*}
Setting the optimality conditions $\partial_{\pi(y|x)}{\cal G}(\pi)=0$ for all $x$ and $y$, we deduce that for any $x$ in the support of $\rho$, we have, for any $y$,
\begin{eqnarray*}
\lambda_x &=& \tau \frac{\mu(y|x)}{\pi(y|x)} \left(  r(x,y) - \tau \log\frac{\pi(y|x)}{\piref(y|x)} - V(x) \right) \\
&=& \tau \sum_{y}\pi(y|x) \frac{\mu(y|x)}{\pi(y|x)} \left(  r(x,y) - \tau \log\frac{\pi(y|x)}{\piref(y|x)} - V(x) \right) \\
&=& \tau \sum_{y}\mu(y|x) \left(  r(x,y) - \tau \log\frac{\pi(y|x)}{\piref(y|x)} - V(x) \right) \\
&=& \tau \left(V^\pi(x)- V(x)\right).
\end{eqnarray*}
Thus 
$$\frac{\mu(y|x)}{\pi(y|x)} \left(  r(x,y) -\tau  \log\frac{\pi(y|x)}{\piref(y|x)} - V(x) \right) = V^\pi(x)- V(x),$$
or, equivalently:
\beq\label{eq:1}
\tau \log\frac{\pi(y|x)}{\piref(y|x)} = r(x,y) - V(x) - \frac{\pi(y|x)}{\mu(y|x)}\left(V^\pi(x)- V(x)\right),
\eeq
from which we deduce that the optimal policy 
satisfies
\begin{align}
\pi(y|x) &=\piref(y|x) e^{\frac 1\tau \left[ r(x,y) - \frac{\pi(y|x)}{\mu(y|x)}\left(V^{\pi}(x) - V(x) \right) -V(x) \right]} \\ &\propto \piref(y|x) e^{\frac 1\tau \left[ r(x,y) - \frac{\pi(y|x)}{\mu(y|x)}\left(V^{\pi}(x) - V(x) \right) \right]}
\end{align}
which proves the first claim \eqref{eq:first.claim2}.

Now from the property of the optimal policy $\pi^*$, and from \eqref{eq:1}, we deduce that
\beq\label{eq:2}
\tau \log\frac{\pi(y|x)}{\pi^*(y|x)} =  V^*(x) - V(x) - \frac{\pi(y|x)}{\mu(y|x)}\left(V^\pi(x)- V(x)\right).
\eeq
Taking the expectation w.r.t.~$\mu(\cdot|x)$ on both sides, we get
$$
\tau \sum_y\mu(y|x) \log\frac{\pi(y|x)}{\pi^*(y|x)} =  V^*(x) - V^\pi(x).
$$
Plugging back $V^*(x)$ into \eqref{eq:2} we have for any $x,y$,
\beqan
\tau \log\frac{\pi(y|x)}{\pi^*(y|x)} &=&  \tau \sum_{y'}\mu(y'|x) \log\frac{\pi(y'|x)}{\pi^*(y'|x)} + V^\pi(x) - V(x) - \frac{\pi(y|x)}{\mu(y|x)}\left(V^\pi(x)- V(x)\right) \\
&=&  \tau \sum_{y'}\mu(y'|x) \log\frac{\pi(y'|x)}{\pi^*(y'|x)} + \left( V^\pi(x) - V(x)\right) \left( 1 - \frac{\pi(y|x)}{\mu(y|x)}\right).
\eeqan 
Let us write $\epsilon(x) \eqdef \frac 1\tau \max_y \left| \left( V^\pi(x) - V(x)\right) \left( 1 - \frac{\pi(y|x)}{\mu(y|x)}\right) \right|$ and $\rho(y|x)\eqdef \log\frac{\pi(y|x)}{\pi^*(y|x)}$. We have that
\beqan
1 = \sum_y \pi(y|x) &=& \sum_y \pi^*(y|x) e^{\rho(y|x)} \\
&\leq& \sum_y \pi^*(y|x) e^{\sum_{y'}\mu(y'|x)\rho(y'|x) + \epsilon(x)}\\
&=& e^{\sum_{y'}\mu(y'|x)\rho(y'|x) + \epsilon(x)},
\eeqan 
thus $\sum_{y'}\mu(y'|x)\rho(y'|x) \geq -\epsilon(x)$. Similarly we can upper bound $\sum_{y'}\mu(y'|x)\rho(y'|x)$ by $\epsilon(x)$.

We deduce that for all $y$
\beqan 
|\rho(y|x)|&\leq& \left|\sum_{y'}\mu(y'|x)\rho(y'|x)\right| + \epsilon(x)\\
&\leq& 2 \epsilon(x),
\eeqan
which concludes the proof of the second claim.
\end{proof}


\section{A policy-only algorithmic variant}\label{sec:stipo-novalue}

To reduce the computational burden incurred by \texttt{DRO-V}, where we use typically use two different neural networks for $\pi$ and $V$, we can also define a loss function over policy $\pi$ alone. To this end, and as per Section \ref{sec:approx-error}, we can define $V^\pi=\arg\min_V {\cal L}(\pi,V)$ as the best value function given a fixed policy $\pi$. Writing down the optimality condition we recall that, as per Proposition \ref{prop:approx-error}:
$$V^\pi(x)=\E_{y\sim\mu} \left[r(x,y) - \tau \log\frac{\pi(y|x)}{\piref(y|x)}\right].$$

From this we deduce and define the loss over policy only, $\L(\pi)$:
\begin{eqnarray*}
\L(\pi)&\eqdef& \L(\pi,V^\pi)\\
&=& \frac 12 \E_{x\sim\rho, y\sim\mu(\cdot|x)} \left[ \left(
r(x,y) -\tau  \log\frac{\pi(y|x)}{\piref(y|x)} - 
\E_{y'\sim\mu(\cdot|x)} \left[ r(x,y') - \tau \log\frac{\pi(y'|x)}{\piref(y'|x)}\right]
\right)^2\right]\\
&=& \frac 12 \E_{x\sim\rho}\left[\mbox{Var}_{y\sim\mu(\cdot|x)} \left(
r(x,y) -\tau  \log\frac{\pi(y|x)}{\piref(y|x)} \right) \right].
\end{eqnarray*}

We notice that operationalising this loss requires evaluating an empirical estimate for the variance of a residual term, $\mbox{Var}_{y\sim\mu(\cdot|x)}$, over completions $y$ given a prompt $x$. Since the variance cannot be estimated from a single sample $y\sim \pi(\cdot|x)$, we cannot practically implement the loss function in an unbiased way. However, if we allow for two samples estimation, i.e., $y_1,y_2\sim \pi(\cdot|x)$, this will deviate from the single trajectory setting but closely connect to the above loss to the pairwise preference loss. Most notably, we can rewrite
\begin{align*}
    \L(\pi) = \frac 12  \E_{y_i\sim \pi(\cdot|x),x\sim\rho}\left[\left(
r(x,y_2) - r(x,y_1) + \tau  \log\frac{\pi(y_1|x)}{\piref(y_1|x)} - \tau \log\frac{\pi(y_2|x)}{\piref(y_2|x)}  \right)^2 \right].
\end{align*}
Note that the reward difference $r(x,y_2)-r(x,y_1)$ can be understood as a form of preference of $y_2$ over $y_1$. Indeed, when replacing the difference by a preference indicator, the above loss simply reduces to the IPO loss \citep{azar2023general}.


However, as we are placed in the single trajectory setting rather than the preference setting (where a $2$-sample estimate of the variance would be available for each prompt), we resort instead to a tractable approximation for $\mathcal{L}(\pi)$. We estimate the variance term $\mbox{Var}_{y\sim\mu(\cdot|x)}(\cdot)$ as variance on each mini-batch $B$ instead, where each element has a different $x$. Thus we replace this term with empirical estimate $\mbox{Var}_{y\sim\B}$, involving computation of the empirical variance of log-policy ratios along the batch axis. This gives us a tractable, approximate policy-only variant for DRO, where the loss is parameterised by $\theta$ only:
\begin{equation}\label{eq:stipov-policyonly}
\L_{\texttt{DRO-P}}(\theta) \eqdef \frac 12 \E_{x\sim\rho}\left[\mbox{Var}_{y\sim\B } \left(
r(x,y) -\tau  \log\frac{\pi_\theta(y|x)}{\piref(y|x)} \right) \right]
\end{equation}

We present the associated Algorithm \ref{alg:algo-novalue}, that we will call \texttt{DRO-P}, below.

\begin{algorithm}[h!]
   \caption{\texttt{\algo}~\texttt{P}olicy Only Optimisation : \texttt{DRO-P} }
   \label{alg:algo-novalue}
\begin{algorithmic}
   \STATE {\bfseries Inputs:} A single-trajectory dataset: $(x_i, y_i, r_i=r(x_i, y_i))_{i=1}^N$, a parameterised policy: $\pi_\theta$, a reference policy: $\piref$, a regularisation scalar $\tau$, a number of total steps $K$, a batch size $B$ and an optimiser.  

   \FOR{$k=1$ {\bfseries to} $K$}
   \STATE Sample uniformly a batch: $(x_i, y_i, r_i)_{i=1}^B$
   \STATE Compute gradient updates $\nabla_\theta {\L}_{\texttt{DRO-P}}(\theta)$ for the sampled loss defined by \eqref{eq:stipov-policyonly}:
   
   $$\nabla_\theta {\L}_{\texttt{DRO-P}}(\theta) = -\frac{1}{B} \sum_{i=1}^B \Big( \nabla_\theta \log \pi_\theta(y_i|x_i) r(x_i,y_i) - \tau \nabla_\theta \mbox{Var}_{y\sim\B } \Big( \log\frac{\pi_\theta(y_i|x_i)}{\piref(y_i|x_i)} \Big) \Big)$$
 
   \STATE Update the policy parameters: $\theta \gets \texttt{UpdateOptimiser}(\theta, \nabla_\theta {\L}_{\texttt{DRO-P}}(\theta))$
  
   \ENDFOR
   \STATE {\bfseries Outputs:} $\pi_\theta$
\end{algorithmic}
\end{algorithm}

\texttt{DRO-P} appears similar to a standard policy gradients RL algorithm, featuring an additional variance regulariser computed on the batch. We found that \texttt{DRO-P} in practice does not collapse and produces non-trivial resulting policies, that can in fact beat the SFT policy in side-by-side comparison on our evaluation setting (T5-encoder on UltraFeedback task, as in Experiments section \ref{sec:experiments}). Nonetheless, this algorithm, while being simpler and more parsimonious than \texttt{\algo-V}, is also less competitive. This speaks to the necessity to use both a policy and a value function - in a theoretically sound and principled way - in order to achieve the best empirical results, and is why we emphasised \texttt{DRO-V} in our presentation. Finally, this result also shows that DRO is a \emph{framework} that can be instantiated in a variety of algorithmic ways.

\newpage

\section{Additional Empirical Results}\label{sec:appendix-experiments}

\subsection{Parameter sharing ablation}

Here we provide the full results of a pairwise side-by-side comparison involving all architectural modifications, namely either using a single neural network or two networks for $\pi$ and $V$, and whether to use a state-independent function $V$ or not (a single or multiple value number per batch). Results are presented below Figure \ref{fig:appendix-ablation-paramshare}, complementing those from Figure \ref{fig:ablation-paramshare} in the main text, which only presented comparison against the SFT.

\begin{figure}[!ht]
\centering
\begin{minipage}{0.99\textwidth} 
\centering
\footnotesize
\begin{tabular}{c|ccccc}
\toprule
Parameter Sharing Variant & \makecell[c]{Double Net\\Single Value} & \makecell[c]{Single Net\\Single Value} & \makecell[c]{Single Net\\Multiple Values} & \makecell[c]{Double Net\\Multiple Values} & \makecell[c]{SFT} \\
\midrule
Double net single value & $50.0\%$ & $37.1\%$ & $28.9\%$ & $54.9\%$ & $27.9\%$ \\
Single net single value & $62.9\%$ & $50.0\%$ & $45.2\%$ & $70.4\%$ & $42.4\%$ \\
Single net multiple value & $71.1\%$ & $54.8\%$ & $50.0\%$ & $74.7\%$ & $44.5\%$ \\
Double net multiple value & $45.1\%$ & $29.6\%$ & $25.3\%$ & $50.0\%$ & $23.4\%$ \\
SFT & $72.1\%$ & $57.6\%$ & $55.5\%$ & $76.6\%$ & $50.0\%$ \\
\bottomrule
\end{tabular}

\caption{Parameter sharing variants. \texttt{DRO-V} variants' side-by-side winrate.}
\label{fig:appendix-ablation-paramshare}
\end{minipage}
\end{figure}

\section{Discussion on the cancellation of partition function}
\label{sec:appendix-technicalities}
We provide additional discussion on the cancellation of partition function in the pairwise preference optimisation algorithms. Starting with the regularised policy optimisation setting in \eqref{eq:setup}, we can write
\begin{align*}
   \tau \log \frac{\pi^\ast(y|x)}{\piref (y|x)} - V(x) = r(x,y).
\end{align*}
The derivation of pairwise preference optimisation algorithms typically casts the offline policy optimisation problem into a reward modeling problem. That is, one can parameterise a reward function $r$ using the equation above, and plug into a reward modeling loss. 

Many of the pairwise preference optimisation algorithms make use of the paired data $(x,y_l,y_w)$ where given a prompt $x$, the completion $y_w$ is preferred over $y_l$ by human raters. The loss function takes the form of taking the reward difference between the two completions, resulting in
$$r(x,y_w)-r(x,y_l) =\tau \log \frac{\pi^\ast(y_w|x)}{\piref (y_w|x)}-\tau \log \frac{\pi^\ast(y_l|x)}{\piref (y_l|x)} $$
which \emph{cancels} out the partition function or the value function $V(x)$. Hence, in the pairwise preference optimisation setting, the partition function $V$ is implicit in the derivation and we do not need to parameterise or learn it explicitly. This is unlike the single trajectory setting, where the partition function does not cancel out and hence needs to be learned explicitly.

\section{Qualitative study of prompt completions}
\label{sec:appendix-qualitative}

Here we show some example side-by-side prompt completions for our best \texttt{DRO-V} and KTO checkpoints (as evaluated in Section \ref{sec:experiments}), using \emph{T5-XL} text encoders. We show those under the format 'Prompt - Response 1 ( \texttt{DRO-V} ) - Response 2 (KTO)'. Overall, \texttt{DRO-V} responses appear slightly more helpful and focused, and are generally preferred by the Palm2 LLM judge. We found that it's often a combination of helpfulness and concision, rather than just one of these dimensions, that tilts the decision one way or another. In order to be as representative as possible, we try to illustrate both some clear preference cases, as well as others that the reader might find more subtle.

\newpage

\begin{shaded}
\texttt{\underline{Prompt}: \small List the components of an Ethernet cable.}

\texttt{Response 1: \small ['Ethernet cables are made up of several components, including conductors (conductors), wires (coaxial cables), and connectors. The components of an Ethernet cable are designed to
transmit data and signals from a computer over a network. The following are the components of an Ethernet cable: 1. Conductors: Conductors are the core of the Ethernet cable. They
are made up of copper wires in pairs, joined together by a metallic strip. The wires in the Ethernet cable form a network. 2. Wires: These wires are the insulators, or inner
shield, of the Ethernet cable. They form the outer layer of the cable, preventing the signals from being transmitted and prevent electromagnetic interference from the network. 3.
Connectors: Connectors are the wires that carry data and signals from the Ethernet cable to the network. They are made up of copper and non-conductive metal wires. The types of
Ethernet cables available include Cat 5, Cat 7, and Category 5.']}

\texttt{Response 2: \small ['Ethernet cables are made up of several components, including conductors (conductors), wires (modems), and casings to shield and protect them from the elements. They use twisted
pairs to join them together, which are insulated with copper. This creates a sturdy and flexible cable that has a low impedance, making it ideal for network connections. Ethernet
cables can also include other components, such as switches and hubs, but such items are uncommon in Ethernet-based networks. For more information on Ethernet, refer to "The History
of Computer Networks."']}
\end{shaded}

\begin{shaded}
\texttt{\underline{Prompt}: \small Q: You are given a sentence and a question in the input. The information provided in the sentence should be enough to answer the question. You're expected to write the correct
answer. Do not use any facts other than those provided in the sentence. Sentence: They were Dig the dog, Mittens the rooster, and Bub the duck. Question: Who made the noise that
caused the sock monster to look at the friends? A:}

\texttt{Response 1: \small ['Dig the dog, Mittens the rooster, and Bub the duck.']}

\texttt{Response 2: \small ['Dig, Mittens, and Bub the duck.']}
\end{shaded}

\begin{shaded}
\texttt{\underline{Prompt}: \small You will be given a definition of a task first, then an example. Follow the example to solve a new instance of the task. You are given two sentences. You have to find if there is entailment or agreement of the Hypothesis by the Premise. From the given pair of sentences, you should identify if there is enough information in the Premise to support the claim made in the Hypothesis. The Premise may not exactly be the same as Hypothesis. Your task is to return 'entails' if the premise supports hypothesis else return 'neutral'. Premise:
Lyme Disease is caused by a bacterium that's transmitted by tick bite, but many infected people don't remember a bite. Hypothesis: Lyme disease is caused by bacteria. Solution:
entails Why? The premise sentence agrees with the hypothesis that Lyme Disease is a bacterium. The premise also gives additional information about Lyme disease. Therefore Premise
entails Hypothesis. New input: Premise: Fronts are boundaries between air masses of different temperatures. Hypothesis: In weather terms, the boundary between two air masses is
called front. Solution: }

\texttt{Response 1: \small ['entails Why? The premise states that fronts are boundaries between air masses of different temperatures which is a clear statement of the hypothesis. The premise also gives
additional information about fronts by adding that they are boundaries between air masses of different temperatures. Therefore, the premise entails the hypothesis.']}

\texttt{Response 2: \small ['entails']}
\end{shaded}

\newpage

\begin{shaded}
\texttt{\underline{Prompt}: \small What are some common misconceptions about the value of a college degree in today's job market?}

\texttt{Response 1: \small ['Common misconceptions about the value of a college degree include: 1. College degrees are irrelevant to the job market: Students with low-skill job experience, no college degree,
or with limited skills in a desired field of study might still be able to get a competitive job upon graduation and receive a paycheck. This might lead to a perception among some
that college degrees have little value in today\'s job market. 2. College degrees are required for all jobs: Many employers believe that college education is a basic requirement
for finding jobs. However, this is not the case worldwide and some sectors may require additional skills, such as computer programming or engineering, which may not be covered by
the typical four-year college degree. 3. College degrees are insignificant: Many believe degrees in science, mathematics, engineering, and other related fields are more important
in today\'s job market. These degrees may increase the marketability of individuals, but they may not lead to a higher salary. 4. College degrees are merely for the young:
According to a recent study, college graduates are more likely than the general population to be unemployed or underemployed within a decade. This trend is primarily due to the
rapid aging of the baby boomer generation, who has not yet entered the workforce in the traditional sense. 5. College degrees are not worth it: In some industries, especially those
that compete with big business, a bachelor\'s degree or higher is often enough to get a job. This may be true for entry-level positions, but for senior positions, a master\'s or
doctoral degree may be better. 6. College degrees for older people: According to some surveys, some 65-69 percent of people over the age of 55 believe they might need a college
degree to find a job, even if the job market is not favorable for their experience. This view is aimed at the older generation and the perception of college degrees as just another
piece of the puzzle they must "educate" before being able to get hired. By understanding the current job market and what employers value, it is possible to create a more
comprehensive and accurate picture of the value of a college degree. This can include examining industry trends, job profiles, and demographic characteristics when evaluating job
offers for potential candidates.']}

\texttt{Response 2: \small ["Sure, there are common misconceptions about the value of a college degree in today's job market. Here's a list of some common misconceptions about the value of a college degree.
1. Colleges are only good for academics. Colleges are not the only source of knowledge. Technical skills and practical experience are also valued in today's job market. 2. Colleges
are expensive. The cost of a college education is often more expensive than a high school diploma. Furthermore, the cost of tuition and other expenses associated with a college
education is significantly higher than the cost of going to school. 3. Colleges are not relevant for today's job market. Colleges are considered irrelevant for today's job market
and are not as important as they were in the past. 4. Colleges are a waste of money. Colleges are not cost-effective for individuals or society as a whole. They are often viewed as
a waste of time because they are seen as a waste of money by those who did not finish high school. 5. Colleges are not affordable. Colleges are seen as costly, even though they may
be less expensive than high schools. To counteract these misconceptions, the U.S. Department of Education (ED) publishes the results of the annual Cost of College Survey, which
provides an objective, fact-based assessment of the cost of a college education. The research is based on recent data and is widely accepted as accurate and credible. However,
these stereotypes are often harmful in persuading people to forego the expense of a college education. As a result, many individuals don't attend college as they believe that the
quality of education is not worth the cost. By avoiding misconceptions and looking at the costs of various college options, individuals can make educated choices that benefit both
themselves and society."]}
\end{shaded}

\newpage

\begin{shaded}
\texttt{\underline{Prompt}: \small What do you think it feels like to be a Software Engineer?}

\texttt{Response 1: \small ['"As a software engineer, you have the unique opportunity to work on some of the world\'s most complex and challenging software projects. Your work can sometimes feel like a
never-ending battle to come up with the best solution, but it\'s absolutely essential." According to a study conducted by the International Institute for Engineering Education
(IIE), software engineers often experience a grueling and unpredictable work schedule, with long hours and intense responsibilities. The work may involve developing algorithms,
writing code, fixing defects, testing and debugging, updating and maintaining software, and managing work-life balance. This can be especially challenging for those who are not
used to a fast-paced, fast-paced work environment. It is also essential to provide a safe working environment, where teamwork, communication, and problem-solving are essential
qualities for successful software engineers.']}

\texttt{Response 2: \small ["I think it's safe to say that I know firsthand the anxiety and thrill that comes with being a software engineer. It's not just a job, it's an art form, and it demands great skills and an unwavering commitment to continuous learning and development. It's also, sometimes, a lonely profession that requires constant mental and physical focus, resilience, and a willingness to work in a fast-paced and dynamic environment. The constant challenges of staying up-to-date with the latest technology and keeping up with technological advancements can make the experience even more rewarding. And, being a software engineer means that you're constantly learning, growing, and pushing yourself to develop new skills in order to stay current and competitive in a fiercely competitive job market."]}
\end{shaded}

\begin{shaded}
\texttt{\underline{Prompt}: \small Define the definition of the idiomatic expression “change of pace” in the context of altering one's rhythm or tempo of activity, which is often used to signify a shift in the pace of an ongoing event or activity.}

\texttt{Response 1: \small ['In the context of altering one\'s rhythm or tempo of activity, the idiomatic expression "change of pace" is used to express a change in direction, speed, or intensity. This can
be seen in the following example: 1. "I began walking at a fast pace and then suddenly slowed down when I went into the forest." 2. "He started the race at a walking pace and then
stopped. He had to stop for a few minutes to catch his breath. His heart was racing and he felt his breath. He started again, and got back to running his first run. He ran the
whole race in a couple of hours." 3. "At the local beach, we got lost and only found our way after an hour of searching." In the example of the phrase you mentioned "change of
pace" it means that the tempo or the speed of an activity can be altered or modified without stopping. A sudden change in the tempo of an activity can indicate that it is becoming
faster or more intense. One of the reasons people use "change of pace" when they want to describe transforming their activities is because these changes can have different impacts
on the outcome of those activities. They can make activities more exciting, more challenging, more relaxed, or even more engaging. So, it\'s important to note that these changes
can often be subtle or subtle, depending on the nuances of the activity. In conclusion, the idiomatic expression "change of pace" is used in a way that communicates a change in
direction, speed or intensity, which can give the impression of both excitement and calm.']}

\texttt{Response 2: \small ['"Change of pace" is an idiomatic expression commonly used in English to refer to a sudden or rapid change in a person or event\'s pace or timing. The phrase is also widely used
in technical contexts, such as the phrase "going from a fast pace to a slow pace" or "changing a pace to a slow pace" as the indication of a transition or change in performance.']}
\end{shaded}

\newpage

\begin{shaded}
\texttt{\underline{Prompt}: \small what gift should I take to 8 year olds in India? }

\texttt{ Response 1: \small ['A gift for an 8-year-old boy or girl living in India would be a box of chocolates or a small gift basket with a few toys. Chocolates can be a popular gift in India, while small
gift baskets with some snacks can also be a good choice. It is important to ensure that the gift is appropriate for the age of the recipient: for a young girl this may mean a toy
suitable for her age and interests, or for a boy this could be a doll, an electronic toy, or a stuffed animal. Additionally, you should make sure that the box has a good quality,
safe material for the recipient to open, and that it does not break easily. Also, some gift-giving customs can vary, so it is important to check this beforehand.'] }

\texttt{Response 2: \small ['In India, 8-year-olds generally celebrate their birthday with friends, family, and teachers. Popular gifts often include school supplies, gift cards, or toys. If gifts are
unavailable, parents or caregivers can offer homemade gifts, such as chocolates, fruit, or dry fruits.']}
\end{shaded}

\begin{shaded}
\texttt{\underline{Prompt}: \small Given the task definition and input, reply with output. Given an English language product review, determine if it is a Good Review or a Bad Review. A good review is one where the
product's review has positive tone and Bad review is one where the tone of product's review is negative. I really wanted to like this book. It is set in Italy yet my mother was
Italian and it makes perfect sense for a woman to be there. However the story seemed disjointed at first. Did they meet before or after she disappeared? If we knew why she
disappeared perhaps more insight into her character would have been appreciated. Maybe there will be another book to help with that.}

\texttt{Response 1: \small ["This review shares a mix of factors that could be contributing to a mixed-tone review. While the reviewer expresses a desire to like the book, they also express a desire for a
better understanding of the story. While the reviewer could see potential in the book, they find some parts of the story seeming disjointed, which could be a sign that the writer
neglected to provide a meaningful focus on the character's motivation. While the reviewer might not deem the book a complete waste of time, they still acknowledge that the
disjointedness of the story may have been distracting or frustrating. Overall, the tone of the review is positive, showing interest and appreciation of the work, while expressing
some reservations. Considering the reviewer's initial excitement for the book, their concern about the story's disjointedness, and their desire to see better insights into the protagonist's character, their overall sentiment is ambivalent. This review is an attempt to provide constructive feedback and encourage better storytelling skills, which aligns
with the goals of creating a well-rounded reading experience (readability, clarity, and engagement) in the genre of the book."]}

\texttt{Response 2: \small ['Good Review']}
\end{shaded}

\end{document}